\newtheorem{theorem}{Theorem}
\newcommand{\tabincell}[2]{\begin{tabular}{@{}#1@{}}#2\end{tabular}}
\title{Graph Neural Networks are Heuristics}
\author{%
  Yimeng Min \\
  Department of Computer Science\\
  Cornell University \\
  Ithaca, NY, USA \\
  \texttt{min@cs.cornell.edu} \\
  \And
  Carla P. Gomes\\
  Department of Computer Science\\
  Cornell University \\
  Ithaca, NY, USA \\
  \texttt{gomes@cs.cornell.edu}\\
}
\begin{document}
\maketitle

\begin{abstract}
Graph neural networks are usually treated as auxiliaries for combinatorial optimization: they imitate algorithms, guide search, or supply scores to classical procedures. 
We show that this auxiliary role is not intrinsic. 
A GNN can itself be a heuristic. 
For the Euclidean Travelling Salesman Problem, we train a non-autoregressive GNN with no labels, rewards, sequential decoding, search, or local improvement. 
A differentiable Hamiltonian-cycle objective is the only supervision. 
The trained model produces a complete tour in one forward pass, while dropout and snapshots from a single training trajectory provide solution diversity without engineered moves. 
The heuristic is therefore learned, not programmed. 
It is also fast: batched inference remains in the millisecond regime on GPUs. 
Experiments on TSP100, TSP200, and TSP500 show that the model consistently improves over nearest-neighbor greedy baselines. 
These results identify unsupervised GNNs as a class of fast learned heuristics for combinatorial optimization.
\end{abstract}

\section{Introduction}
Heuristics are the grammar of problem-solving: the means by which intelligence—human or artificial—moves from paralysis in the face of combinatorial explosion to progress through approximation. At their core, heuristics are not mere tricks or shortcuts, but strategies for reasoning under constraint, where exact solutions are unattainable or impractical.    The Travelling Salesman Problem (TSP) has long served as a proving ground for such strategies. Exact solvers like Concorde~\citep{concorde} achieve optimality at exponential cost, while carefully engineered heuristics such as Lin-Kernighan-Helsgaun~\citep{lin1973effective,helsgaun2000effective} embody layers of local improvement rules that deliver state-of-the-art approximations. Simpler constructive approaches (nearest neighbor, farthest insertion) reveal the essential tradeoff between efficiency and accuracy that has always defined heuristic reasoning.  

From this broader perspective, the study of TSP is part of a long tradition: the search for principled ways to navigate intractable problems through structure, approximation, and design. This stance has been formalized into algorithmic procedures that exploit structure and domain knowledge to yield good-enough solutions. Heuristics, in this view, are systematic ways of trading optimality for efficiency while still capturing the essential structure of the problem~\citep{halim2019combinatorial,christofides1976worst,karlin2021slightly}.

We take a different view: heuristics need not be hand-crafted rules or search procedures. They can emerge directly from structure and learning. Building on the \emph{Structure As Search} framework~\citep{min2023unsupervised,min2023unsupervisedw,min2025structure}, we formulate TSP as direct permutation learning. Our non-autoregressive (NAR) method generates Hamiltonian cycles without supervision, explicit search, or sequential decoding. The key insight is that TSP's inherent structure---shortest Hamiltonian cycles---constrains the solution space sufficiently for neural networks to learn effective solutions. We employ Gumbel-Sinkhorn relaxation during training and Hungarian algorithm decoding at inference, enabling end-to-end optimization. In this sense, our method constitutes a new form of heuristic. It is not designed by hand, but arises from learning from instances. Like classical heuristics, it does not guarantee optimality; yet, like the best of them, it produces consistently strong solutions across instance sizes. Our results show that heuristics can now be \emph{learned} rather than \emph{engineered}, pointing toward a paradigm where structural constraints and generative models replace explicit search as the foundation of combinatorial optimization, our implementation can be found at \href{https://github.com/yimengmin/GNNs-are-Strong-Heuristics}{\textcolor{blue}{GNNs-are-Strong-Heuristics}}.

\section{Background}
Combinatorial optimization problems lie at the core of operations research, theoretical computer science, and many real-world decision-making systems. 
Classical approaches range from exact solvers, which guarantee optimality at exponential cost, to heuristic algorithms that trade optimality for scalability.
Among these, greedy heuristics occupy a central role due to their simplicity, interpretability, and strong empirical performance on many structured instance families.

Graph Neural Networks (GNNs) have recently gained prominence as a tool for automating heuristic design in combinatorial optimization, moving beyond traditional manual approaches~\citep{khalil2017learning,bengio2021machine,cappart2023combinatorial}. Rather than hand-crafting decision rules, learning-based methods aim to infer solution strategies directly from data, leveraging structural regularities across problem instances.
This paradigm has led to a wide spectrum of approaches, including reinforcement learning with autoregressive decoding, supervised learning combined with post hoc search, and unsupervised formulations that embed combinatorial constraints into differentiable objectives.

\paragraph{What Does It Mean to Outperform Greedy?}
Recent work has argued that modern graph neural networks for combinatorial optimization do not reliably outperform classical greedy heuristics~\citep{angelini2023modern,boettcher2023inability,schuetz2023reply,schuetz2023reply2}.
This observation is important, but its interpretation is not automatic.
Greedy methods are designed to exploit local structure.
They are therefore strongest precisely on instances where local choices already determine high-quality global solutions.

We take the comparison with greedy as a diagnostic rather than as a verdict.
The relevant question is not whether learning defeats greedy in every favorable regime.
It is whether a learned model can represent constraints that are invisible to purely local decision rules.
For TSP, this distinction is sharp.
A tour is not a sequence of good nearest-neighbor moves; it is a Hamiltonian cycle, and its quality depends on globally consistent coordination among all vertices.

We therefore remove the usual auxiliary mechanisms: supervision, reinforcement rewards, autoregressive decoding, explicit search, and post hoc local improvement.
In this setting, any gain over greedy must come from the representation and the objective alone.
A persistent improvement over greedy is then evidence that the model has encoded non-local combinatorial structure, rather than merely exploiting local geometric cues.

Guided by this criterion, we use a non-autoregressive framework that predicts a complete permutation in one forward pass.
The permutation defines the Hamiltonian cycle directly, so optimization takes place over global tour structure rather than over incremental path construction.
By placing the Hamiltonian-cycle constraint inside the learning objective, the model is forced to coordinate all decisions jointly.
Thus, outperforming greedy in this setting has a precise meaning: the GNN functions as a learned heuristic whose advantage arises from global consistency, not from imitation, search, or engineered refinement.

\section{Unsupervised Learning for TSP}
\subsection{Matrix Formulation of TSP}
The TSP seeks the shortest Hamiltonian cycle through $n$ cities with coordinates $x_i \in \mathbb{R}^{n \times 2}$, formulated as:
\begin{equation}
\min_{\sigma \in S_n} \sum_{i=1}^{n} d(x_{\sigma(i)}, x_{\sigma(i+1)}),
\end{equation}
where $d(x_i, x_j) = \|x_i - x_j\|_2$ and $\sigma(n+1) := \sigma(1)$. We encode Hamiltonian cycles via permutation matrices. The cyclic shift matrix $\mathbb{V} \in \{0,1\}^{n \times n}$ is defined as:
\begin{equation}
\mathbb{V}_{i,j} = \begin{cases}
1 & \text{if } j \equiv (i+1) \pmod{n} \\
0 & \text{otherwise}
\end{cases}.
\end{equation}
Matrix $\mathbb{V}$ represents the canonical cycle $1 \rightarrow 2 \rightarrow \cdots \rightarrow n \rightarrow 1$. Any Hamiltonian cycle matrix $\mathcal{H} \in \mathbb{R}^{n \times n}$ is generated via similarity transformation: $\mathcal{H} = \mathbf{P} \mathbb{V} \mathbf{P}^\top$ for permutation matrix $\mathbf{P} \in S_n$~\citep{min2023unsupervisedw}. Given distance matrix $\mathbf{D} \in \mathbb{R}^{n \times n}$, the TSP objective becomes:
\begin{equation}
\min_{\mathbf{P} \in S_n} \langle \mathbf{D}, \mathbf{P} \mathbb{V} \mathbf{P}^\top \rangle,
\end{equation}
where $\langle \mathbf{A}, \mathbf{B} \rangle = \text{tr}(\mathbf{A}^\top \mathbf{B})$. To enable end-to-end gradient-based optimization, we relax discrete $\mathbf{P}$ to soft permutation $\mathbb{T} \in \mathbb{R}^{n \times n}$, minimizing:
\begin{equation}\label{eq:tsploss}
\mathcal{L}_{\text{TSP}} = \langle \mathbf{D},\mathbb{T} \mathbb{V} \mathbb{T}^\top \rangle.
\end{equation}

\subsection[Soft-to-Hard Permutation Extraction]{From Soft Permutation $\mathbb{T}$ to Hard Permutation $\mathbf{P}$}

We build the soft permutation $\mathbb{T}$ following \citep{min2023unsupervisedw,min2025structure}. The GNN processes geometric features $f_0 \in \mathbb{R}^{n \times 2}$ (city coordinates) and adjacency matrix:
\begin{equation}
A = e^{-\mathbf{D}/s},
\end{equation}
where $s$ scales the distance matrix $\mathbf{D}$. Network output generates scaled logits:
\begin{equation}\label{eq:fgnn}
\mathcal{F} = \alpha \tanh(f_{\mathrm{GNN}}(f_0, A)),
\end{equation}
where $f_{\mathrm{GNN}} : \mathbb{R}^{n \times 2} \times \mathbb{R}^{n \times n} \to \mathbb{R}^{n \times n}$ and $\alpha$ controls scaling.

Differentiable permutation approximation via Gumbel-Sinkhorn:
\begin{equation}
\mathbb{T} = \mathrm{GS}\left(\frac{\mathcal{F} + \gamma \epsilon}{\tau},\, l\right),
\end{equation}
where $\epsilon$ denotes i.i.d. Gumbel noise, $\gamma$ sets noise magnitude, $\tau$ controls relaxation temperature, and $l$ specifies Sinkhorn iterations~\citep{mena2018learning}.

Discrete permutation extraction at inference:
\begin{equation}
\mathbf{P} = \mathrm{Hungarian}\left(-\frac{\mathcal{F} + \gamma \epsilon}{\tau}\right).
\label{eq:hung}
\end{equation}

Final tour reconstruction: $\mathbf{P} \mathbb{V} \mathbf{P}^\top$ yields the discrete Hamiltonian cycle solving the TSP instance.

Specifically, for any node reordering represented by a permutation matrix $\pi$, the GNN mapping satisfies
\begin{equation}
f_{\mathrm{GNN}}(\pi f_0,\, \pi A \pi^\top) = \pi f_{\mathrm{GNN}}(f_0, A) .
\end{equation}
As a consequence, the soft permutation $\mathbb{T}$ produced by the Gumbel--Sinkhorn operator transforms equivariantly as $\mathbb{T} \mapsto \pi \mathbb{T}$ under input permutations, and the corresponding heat map $ \mathbb{T} \mathbb{V} \mathbb{T}^\top$  becomes $\pi \mathbb{T} \mathbb{V} \mathbb{T}^\top \pi^\top$.
This ensures that all node relabelings yield equivalent solutions.

\subsection{Hamiltonian Cycle Ensemble}
To mitigate long-tail failures, \citep{min2025structure} employs an ensemble over Hamiltonian cycle structures $\mathbb{V}^k$ with $\gcd(k,n)=1$, where each $\mathbb{V}^k$ encodes a distinct structural prior over tours. While effective, this design requires training a separate permutation model for each $\mathbb{V}^k$, as the learning objective is explicitly tied to the chosen cycle structure. Consequently, the total training cost scales with the number of coprime shifts, introducing substantial overhead compared to single-model approaches and limiting scalability to larger problem sizes.

\section{Our Contribution}
In this paper, we propose a single-model, unsupervised framework that encodes the Hamiltonian cycle constraint directly into the learning objective.
Our method is purely unsupervised, requiring no ground-truth solutions or reinforcement rewards. To achieve this, we introduce three elements.  First, we employ an equivariant and invariant representation that preserves permutation symmetry while removing nuisance variations such as translation, ensuring that the model operates directly on the intrinsic structure of the problem. Second, we use dropout as a source of controlled stochasticity within the network, which improves robustness without introducing iterative refinement. Third, we adopt snapshot ensembling along a single training trajectory, allowing multiple functionally distinct solutions to be obtained without additional training cost.

Taken together, these elements clarify how global structure and controlled stochasticity can be incorporated into a single-pass GNN without search.
\section{Our Method}
To avoid the cost of training multiple models, we rely on two complementary ensemble mechanisms that require only a single training run.
First, we employ dropout during training and retain it at inference time, performing multiple stochastic forward passes with different dropout masks. This Monte Carlo dropout procedure yields diverse solutions from a single trained model and effectively acts as an ensemble without additional training.
Second, we adopt a snapshot ensemble strategy by saving multiple checkpoints along a single training trajectory (e.g., at epochs 850, 900, 950, and 1000) and using them jointly at inference. Since all snapshots are obtained from the same optimization run, this approach introduces no extra training cost. In addition, we take a complementary architectural approach that is orthogonal to ensembling. Unlike prior work that uses raw $x_i \in \mathbb{R}^2$ coordinates as GNN inputs, we introduce a symmetry-aware feature extraction mechanism with harmonic enhancement. This design improves representational capacity while preserving equivariance, and does not incur additional training overhead.
\subsection{Equivariant Coordinate Feature Extraction}
\label{sec:equivariant-coord-features}

We introduce an \emph{equivariant coordinate feature extractor} that maps a set of planar points to per-point features while respecting the symmetries of the Euclidean group. The construction is fully deterministic and permutation-equivariant.

\paragraph{Problem Setting}
Let $
X = f_0 =  \{x_i\}_{i=1}^n, x_i \in \mathbb{R}^2$ denote a point set. We consider group actions of: permutations $\pi \in S_n$ acting as $(\pi \cdot X)_i = x_{\pi(i)}$;  translations $t \in \mathbb{R}^2$ acting as $(T_t \cdot X)_i = x_i + t$.


We first remove translation by centering:
$c := \frac{1}{n} \sum_{i=1}^n x_i,  \quad \tilde{x}_i := x_i - c $.
We then compute the empirical covariance
$\Sigma := \frac{1}{n} \sum_{i=1}^n \tilde{x}_i \tilde{x}_i^\top \in \mathbb{R}^{2 \times 2}$.

Let $(\lambda_1, u_\perp)$ and $(\lambda_2, u)$ denote the eigenpairs of $\Sigma$ with $\lambda_1 \le \lambda_2$.
We fix the sign ambiguity by enforcing $u_1 \ge 0$ and apply the same sign to $u_\perp$, yielding a deterministic orthonormal frame
\begin{align}
U := [u_\perp, u] \in \mathrm{O}(2)
:= \{Q \in \mathbb{R}^{2\times 2} \mid Q^\top Q = I\}.
\end{align}

This defines a \emph{data-dependent canonical coordinate system}.

\paragraph{Intrinsic Coordinates and Harmonic Features}

Each point is projected onto the canonical frame:
\begin{equation}
a_x(i) := \tilde{x}_i^\top u,
\qquad
a_y(i) := \tilde{x}_i^\top u_\perp .
\end{equation}

We define intrinsic polar coordinates
\begin{align}
r(i) &:= \sqrt{a_x(i)^2 + a_y(i)^2 + \varepsilon}, \\
\theta(i) &:= \mathrm{atan2}\bigl(a_y(i), a_x(i)\bigr).
\end{align}

Angular information is encoded using $M$ Fourier harmonics:
\begin{equation}
\begin{aligned}
\phi_m^{\sin}(i) &:= \sin\bigl(m\,\theta(i)\bigr), \\
\phi_m^{\cos}(i) &:= \cos\bigl(m\,\theta(i)\bigr),
\end{aligned}
\qquad m = 1,\dots,M .
\end{equation}

The final per-point feature vector is
\begin{equation}
\begin{aligned}
f(i) = \Big[ \;
& r(i),\;
a_x(i),\;\\
& a_y(i), \phi_1^{\sin}(i),\dots,\phi_M^{\sin}(i),\;
\phi_1^{\cos}(i),\dots,\phi_M^{\cos}(i)
\; \Big].
\end{aligned}\label{eq:inputf}
\end{equation}
This feature vector is then fed into the GNN as the initial node features of the instance.

\paragraph{Equivariance and Invariance Properties}
We now formalize the symmetry properties of the construction in Equation~\ref{eq:fgnn}. As mentioned, $f_\mathrm{GNN}$  denotes a GNN viewed as a function that maps the initial node features to an 
$n$-dimensional output representation.

\begin{theorem}[Permutation Equivariance]
\label{thm:perm-equiv}
For any permutation $\pi \in S_n$,
\begin{align}
f_{\mathrm{GNN}}(\pi \cdot X, \pi A(X) \pi^\top)
= \pi \cdot f_{\mathrm{GNN}}(X, A(X)).
\end{align}

\end{theorem}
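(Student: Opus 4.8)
The plan is to factor $f_{\mathrm{GNN}}$ as a composition of the deterministic feature extractor $\mathrm{Feat}: X \mapsto \{f(i)\}_{i=1}^n$ defined in Equation~\ref{eq:inputf}, followed by the message-passing network, and to prove permutation equivariance of each stage in isolation. Since equivariant maps compose, establishing $\mathrm{Feat}(\pi \cdot X) = \pi \cdot \mathrm{Feat}(X)$ together with layerwise equivariance of the network immediately yields the theorem. I would also record at the outset that the adjacency transforms consistently with the hypothesis: because $D(\pi \cdot X)_{ij} = \|x_{\pi(i)} - x_{\pi(j)}\| = D(X)_{\pi(i),\pi(j)}$ and the entrywise map $A = e^{-\mathbf{D}/s}$ commutes with conjugation by a permutation matrix, we get $A(\pi \cdot X) = \pi A(X)\pi^\top$, exactly the argument appearing in the statement.

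First I would show that every \emph{global} quantity in the extractor is permutation-\emph{invariant}. The centroid $c = \frac{1}{n}\sum_i x_i$ and the covariance $\Sigma = \frac{1}{n}\sum_i \tilde{x}_i \tilde{x}_i^\top$ are both symmetric functions of the multiset $\{x_i\}$, so relabeling the points by $\pi$ leaves them unchanged. Because the eigenpairs $(\lambda_1, u_\perp)$ and $(\lambda_2, u)$, together with the deterministic sign convention, are computed solely from $\Sigma$, the canonical frame $U$ is likewise invariant under $\pi$. Each per-point feature $f(i)$—through $\tilde{x}_i$, the projections $a_x(i), a_y(i)$, the polar pair $(r(i), \theta(i))$, and the harmonics $\phi_m^{\sin}(i), \phi_m^{\cos}(i)$—is then a fixed function of the \emph{single} point $x_i$ and the invariant frame. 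Consequently the feature produced at slot $i$ for input $\pi \cdot X$ equals the feature produced at slot $\pi(i)$ for input $X$, i.e. $\mathrm{Feat}(\pi \cdot X)_i = \mathrm{Feat}(X)_{\pi(i)} = (\pi \cdot \mathrm{Feat}(X))_i$, which is exactly $\mathrm{Feat}(\pi \cdot X) = \pi \cdot \mathrm{Feat}(X)$.

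Next I would handle the message-passing stage by induction on the layer index. With equivariant layer-$0$ features supplied by the extractor, each layer aggregates a permutation-invariant function (sum, mean, or max) over the neighbor multiset induced by $A$; conjugating $A$ by $\pi$ and permuting the input features relabels both the transmitted messages and the neighborhoods identically, so the updated features satisfy $F^{(\ell+1)}(\pi \cdot X) = \pi \cdot F^{(\ell+1)}(X)$ whenever $F^{(\ell)}$ does. Propagating through all layers gives equivariance of the network, and composing with the extractor completes the argument.

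The main delicacy is not the permutation symmetry itself—which flows cleanly once $\Sigma$ is seen to be invariant—but ensuring that the eigen-based frame is a \emph{well-defined function of $\Sigma$ alone}, with no tie-breaking that secretly depends on the input ordering. I would make explicit that the sign fixing ($u_1 \ge 0$, with $u_\perp$ inheriting the sign) reads only from $\Sigma$, and note that the construction degenerates solely on the measure-zero set where $\lambda_1 = \lambda_2$ and $U$ is undetermined; this is precisely the locus covered by the ``almost everywhere'' qualifier in the surrounding discussion, and it does not affect permutation equivariance in any case, since $\Sigma$ is unchanged by $\pi$ regardless. Verifying that no step of the extractor smuggles in order-dependence is the one place where genuine care is required.
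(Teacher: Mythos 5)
Your proposal is correct and follows essentially the same argument as the paper's proof: the canonical frame (mean, covariance, eigenvectors with the deterministic sign convention) is a symmetric, hence reindexing-invariant, function of the point set, the per-point map applied with this shared frame is therefore equivariant, and the adjacency built from pairwise distances conjugates as $A(\pi \cdot X) = \pi A(X)\pi^\top$. Your version is simply more explicit than the paper's—in particular the layerwise induction for the message-passing stage and the remark that the eigen-frame must be a function of $\Sigma$ alone are left implicit in the paper (which asserts elsewhere that its diffusion and attention operators are equivariant by construction)—but the decomposition and the key invariance lemma are the same.
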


\begin{proof}
All quantities defining the canonical frame (mean, covariance, eigenvectors) are symmetric functions of the point set and invariant under reindexing. The per-point map is applied independently using this shared frame, implying equivariance. Moreover, since the adjacency matrix $A(X)$ is constructed from pairwise
distances, it transforms equivariantly under permutations as
$A(\pi \cdot X) = \pi A(X) \pi^\top$.
\end{proof}

\begin{theorem}[Translation Invariance]
\label{thm:translation-inv}
For any translation $t \in \mathbb{R}^2$,
\begin{equation}\label{eq:tinv}
f_{\mathrm{GNN}}(T_t \cdot X, A(T_t \cdot X)) = f_{\mathrm{GNN}}(X, A(X)).
\end{equation}
\end{theorem}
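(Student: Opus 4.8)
The plan is to propagate the translation $x_i \mapsto x_i + t$ through the feature-extraction pipeline of Section~\ref{sec:equivariant-coord-features} and show that it is annihilated at the very first step, so that \emph{both} arguments of $f_{\mathrm{GNN}}$—the per-point feature matrix built from Equation~\ref{eq:inputf} and the adjacency matrix $A$—are left unchanged. Once both inputs are shown to be invariant under $T_t$, the conclusion follows immediately, since $f_{\mathrm{GNN}}$ is then merely being evaluated on identical arguments.

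First I would handle the centering step. Under $x_i \mapsto x_i + t$ the mean shifts as $c \mapsto c + t$, so the centered coordinates satisfy $\tilde{x}_i = x_i - c \mapsto (x_i + t) - (c + t) = \tilde{x}_i$ and are therefore invariant. This is the crux of the whole argument: translation is cancelled before any nonlinear processing occurs, and nothing downstream sees the raw $x_i$.

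Next I would observe that every subsequent quantity is a function of the centered coordinates alone and hence inherits their invariance. The covariance $\Sigma = \frac{1}{n}\sum_i \tilde{x}_i \tilde{x}_i^\top$ is unchanged, so its eigenpairs $(\lambda_1, u_\perp)$ and $(\lambda_2, u)$ are unchanged; the deterministic sign-fixing convention ($u_1 \ge 0$, with the same sign applied to $u_\perp$) depends only on these eigenvectors and so yields an identical frame $U$. Consequently the projections $a_x(i), a_y(i)$, the polar coordinates $r(i), \theta(i)$, the Fourier harmonics $\phi_m^{\sin}, \phi_m^{\cos}$, and thus the full feature vector $f(i)$ of Equation~\ref{eq:inputf} are all invariant under $T_t$.

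For the adjacency matrix I would use that $A = e^{-\mathbf{D}/s}$ is a fixed function of the pairwise distances, together with $d(x_i + t, x_j + t) = \|x_i - x_j\|_2 = d(x_i, x_j)$, so that $\mathbf{D}$, and hence $A(T_t \cdot X) = A(X)$, are invariant. With both arguments of $f_{\mathrm{GNN}}$ identical for $X$ and $T_t \cdot X$, Equation~\ref{eq:tinv} follows. I do not anticipate a genuine obstacle here; the only point requiring care is the eigenvector sign/ordering convention, which must be verified to depend solely on $\Sigma$—and is therefore unaffected by translation—together with the standard caveat that the frame is well defined only away from the measure-zero set $\lambda_1 = \lambda_2$, precisely the ``almost everywhere'' qualification already noted in the text.
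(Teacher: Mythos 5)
Your proposal is correct and follows essentially the same route as the paper's own proof: translation is cancelled by the centering step, pairwise distances (hence $A = e^{-\mathbf{D}/s}$) are unchanged, and therefore all downstream quantities and both arguments of $f_{\mathrm{GNN}}$ coincide. One minor remark: the ``almost everywhere'' caveat about $\lambda_1 = \lambda_2$ is not actually needed for translation invariance, since identical centered inputs yield identical frames under any deterministic convention---that qualification matters only for the rotation-invariance theorem.
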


\begin{proof}
Translation shifts the mean by $t$ but leaves the centered coordinates
$\tilde{x}_i$ unchanged. Since pairwise distances are translation invariant,
the adjacency matrix $A = \exp(-\mathbf{D}/s)$ constructed from $X$ is also unchanged
under $T_t$. Consequently, the covariance, canonical frame, intrinsic
coordinates, and all derived features are invariant, implying Equation~\ref{eq:tinv}.
\end{proof}

\subsection{Dropout Regularization}
\paragraph{Scattering-Attention Graph Neural Network}
Our model is a scattering-attention graph neural network (SCT-GNN) that combines diffusion-based representations with learnable attention-based mixing.
Given node features $X \in \mathbb{R}^{N \times d}$ and a weighted adjacency matrix $W \in \mathbb{R}^{N \times N}$, we first construct a set of graph operators capturing multi-scale structure~\citep{min2022can}.
In particular, we use (i) normalized graph convolution operators and (ii) scattering operators derived from powers of a smoothed random-walk matrix.
Each operator defines a diffusion channel that propagates information at a distinct scale.

At layer $\ell$, node representations are updated by aggregating diffusion features across channels:
\begin{align}
H^{(\ell)} = \mathrm{Mix}\Bigl(\bigl\{ \phi_k(W) \, H^{(\ell-1)} \bigr\}_{k=1}^C \Bigr),
\end{align}
where $\phi_k(\cdot)$ denotes the $k$-th diffusion or scattering operator, and $C$ is the total number of channels.
Rather than fixing the aggregation weights, we learn attention coefficients that adaptively combine these channels at each layer.
Concretely, for node $i$ and channel $k$, attention weights are computed as
\begin{align}
\alpha_{i,k}^{(\ell)} = \mathrm{softmax}_k \bigl( a^\top [h_i^{(\ell-1)} \,\Vert\, (\phi_k H^{(\ell-1)})_i ] \bigr),
\end{align}
and the mixed representation is given by
\begin{align}
\tilde{h}_i^{(\ell)} = \sum_{k=1}^C \alpha_{i,k}^{(\ell)} \, (\phi_k H^{(\ell-1)})_i.
\end{align}
The result is passed through a feed-forward transformation with residual connections, yielding the updated state $h_i^{(\ell)}$.

Dropout is applied within the attention weights $\alpha_{i,k}^{(\ell)}$ and the subsequent feed-forward projections, injecting stochasticity into the feature mixing process.
Importantly, this stochasticity affects only intermediate representations and does not modify the final output layer or the discrete projection used to construct tours.
As a result, the model remains a single-pass heuristic, while optionally supporting stochastic inference through internal representation-level perturbations.

It should be noted that all diffusion operators and attention mechanisms are permutation-equivariant by construction, ensuring that the model respects the symmetry of the underlying graph.

\paragraph{Stochastic Inference via MC Dropout.}
At inference time, we optionally introduce stochasticity through Monte Carlo (MC) dropout to sample diverse solutions from a single trained model.
When MC dropout is disabled, the inference procedure is deterministic: for a fixed input instance, the model produces a unique output and hence a single tour.
Enabling MC dropout injects controlled randomness into the network computation, causing repeated forward passes on the same instance to yield different outputs.
While the resulting perturbations are small in the continuous output space, they may lead to qualitatively different tours after the discrete projection to a permutation.
MC dropout does not involve search, instead, it provides a lightweight means of generating solution diversity, allowing a trade-off between computational cost and solution diversity.

\subsection{Snapshot Ensembles}
Snapshot ensembling exploits the dynamics of a single training trajectory to obtain multiple diverse models without repeated training~\citep{huang2017snapshot}.
The key insight is that modern neural network optimization traverses a sequence of distinct regions in parameter space before convergence.
Intermediate checkpoints along this trajectory can therefore serve as effective ensemble members, even though they are produced by a single optimization run.

In our setting, training GNNs entails navigating a highly non-convex loss landscape. Consequently, model parameters obtained at different epochs, including late in training, can correspond to functionally distinct solutions with comparable loss values but differing empirical performance. Ensembling these checkpoints reduces variance and mitigates failure modes associated with any single model instance. Importantly, this diversity emerges naturally from the optimization dynamics and requires neither explicit architectural modifications nor multiple random initializations.

In our framework, we adopt snapshot ensembling by selecting a small set of checkpoints from a single training run.
Concretely, for a model trained over $t_\text{max}$ epochs, we select checkpoints at epochs
\[
\{t_1, t_2, \dots, t_m\}, \quad \text{e.g., } \{850, 900, 950, 1000\},
\]
or more generally by sampling snapshots at fixed intervals (e.g., every 50 epochs) during the later stages of training.
Each snapshot is then used independently at inference time, and the final solution is selected or aggregated across snapshots.

In practice, we form a simple ensemble by performing multiple stochastic forward passes and selecting, for each instance, the best tour among the sampled outputs, effectively realizing an ensemble over implicit models without additional training or architectural changes. Our stochasticity arises entirely within the network forward pass and does not involve iterative refinement or neighborhood exploration.
Unlike approaches used in ~\citep{min2025structure} that require training multiple models with different structural priors, our methods shift diversity generation entirely into the training dynamics.
This allows us to recover the robustness benefits of ensembling while avoiding the substantial computational overhead associated with repeated end-to-end training.

\section{Training and Inference}
We train our scattering-attention GNN using distributed data parallelism on multi-node GPU clusters.
For TSP100 and TSP200, we launch runs on $2$ nodes with $4$ GPUs per node (8 GPUs total), using per-GPU batch size $256$ and the Adam optimizer with weight decay $2.5\times 10^{-5}$, dropout $0.1$, and a learning rate $8\times 10^{-3}$. We use 16-layer and 20-layer GNNs with  temperature $\tau=3$ and hidden dimension of 256 for TSP 100 and TSP 200, respectively.
We train for up to $1000$ epochs for TSP 100 and 200 and 500, with a scheduler and a $15$-epoch warmup, enable adaptive gradient clipping, and apply early stopping with patience $100$; checkpoints are saved every $5$ epochs with automatic resume enabled.
For TSP 500, we scale to $4$ nodes with $4$ GPUs per node (16 GPUs total) and train a larger model (hidden dimension $384$, $48$ layers) with dropout $0.5$, learning rate $2\times 10^{-3}$, weight decay $10^{-4}$, and temperature $\tau=3.5$, using a smaller per-GPU batch size of $50$ due to memory constraints.
Across all settings, training and validation instances are sampled from uniform Euclidean TSP distributions, and we fix the distance scale to $5.0$ to match the inference-time construction of the graph adjacency. 

\begin{figure*}[tbp]
    \centering
    \begin{subfigure}{0.32\linewidth}
        \centering
        \includegraphics[width=\linewidth]{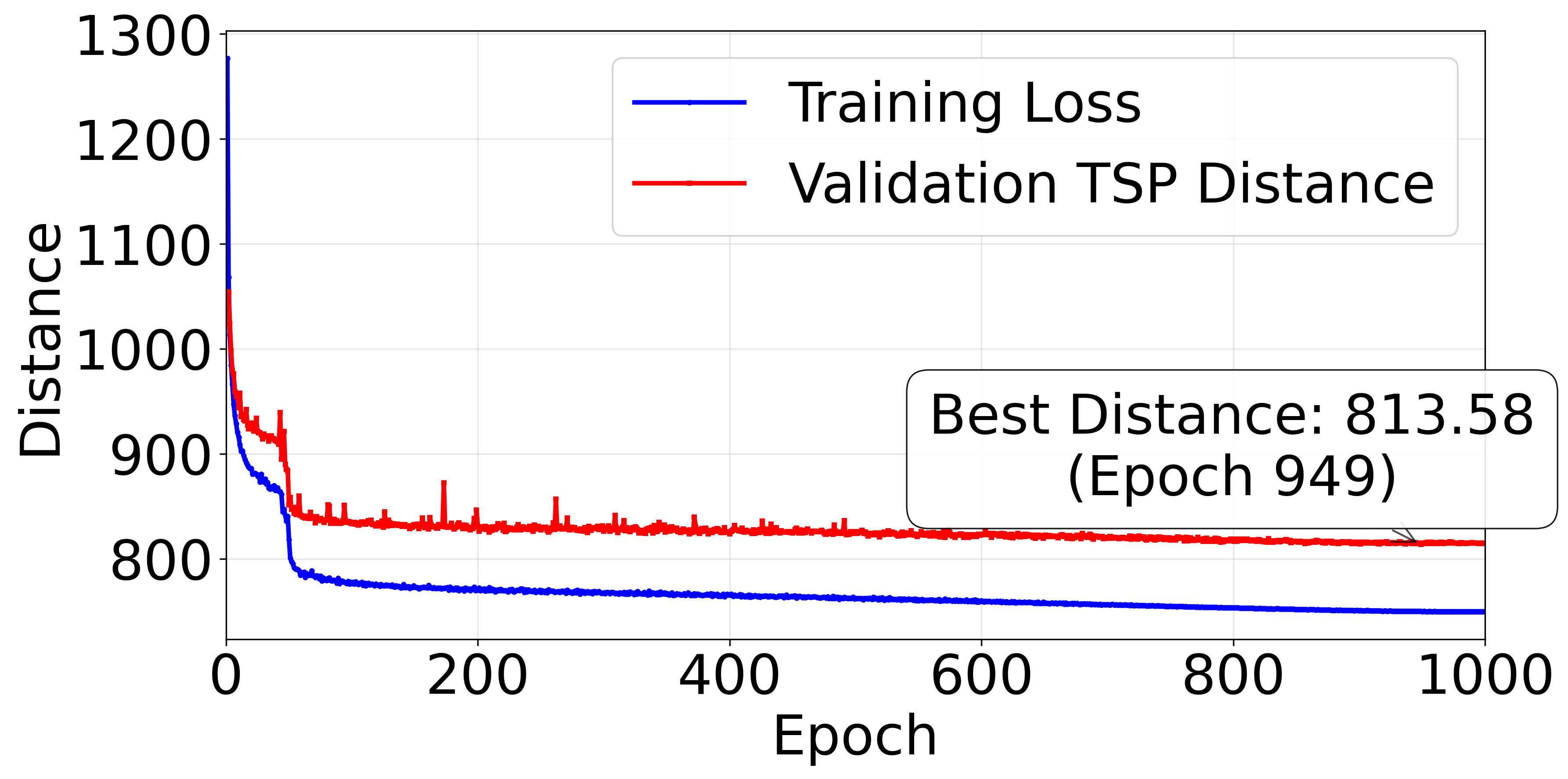}
        \caption{TSP 100}
        \label{fig:TSP100traininghis}
    \end{subfigure}
    \hfill
    \begin{subfigure}{0.32\linewidth}
        \centering
        \includegraphics[width=\linewidth]{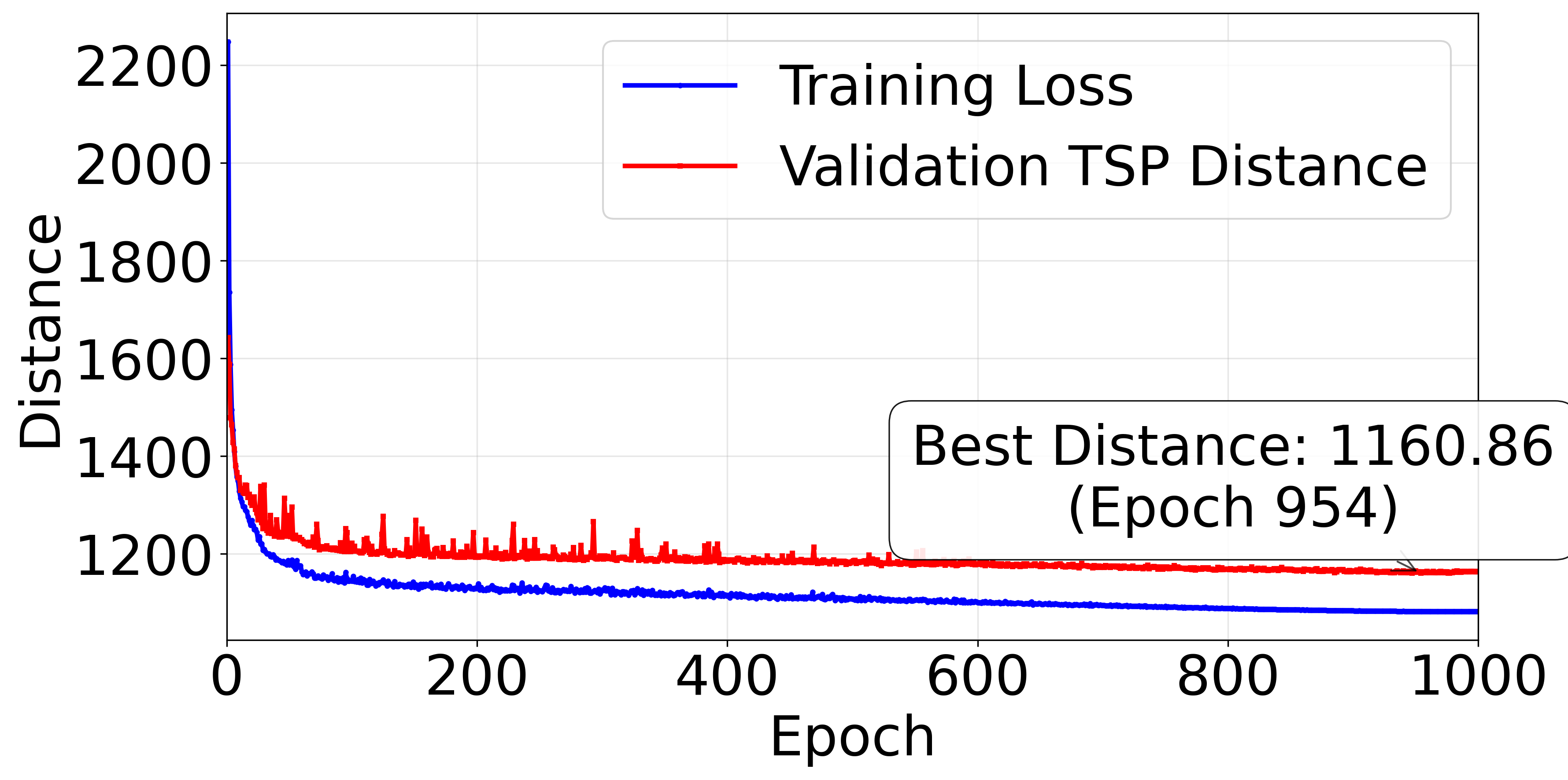}
        \caption{TSP 200}
        \label{fig:TSP200traininghis}
    \end{subfigure}
    \hfill
    \begin{subfigure}{0.32\linewidth}
        \centering
        \includegraphics[width=\linewidth]{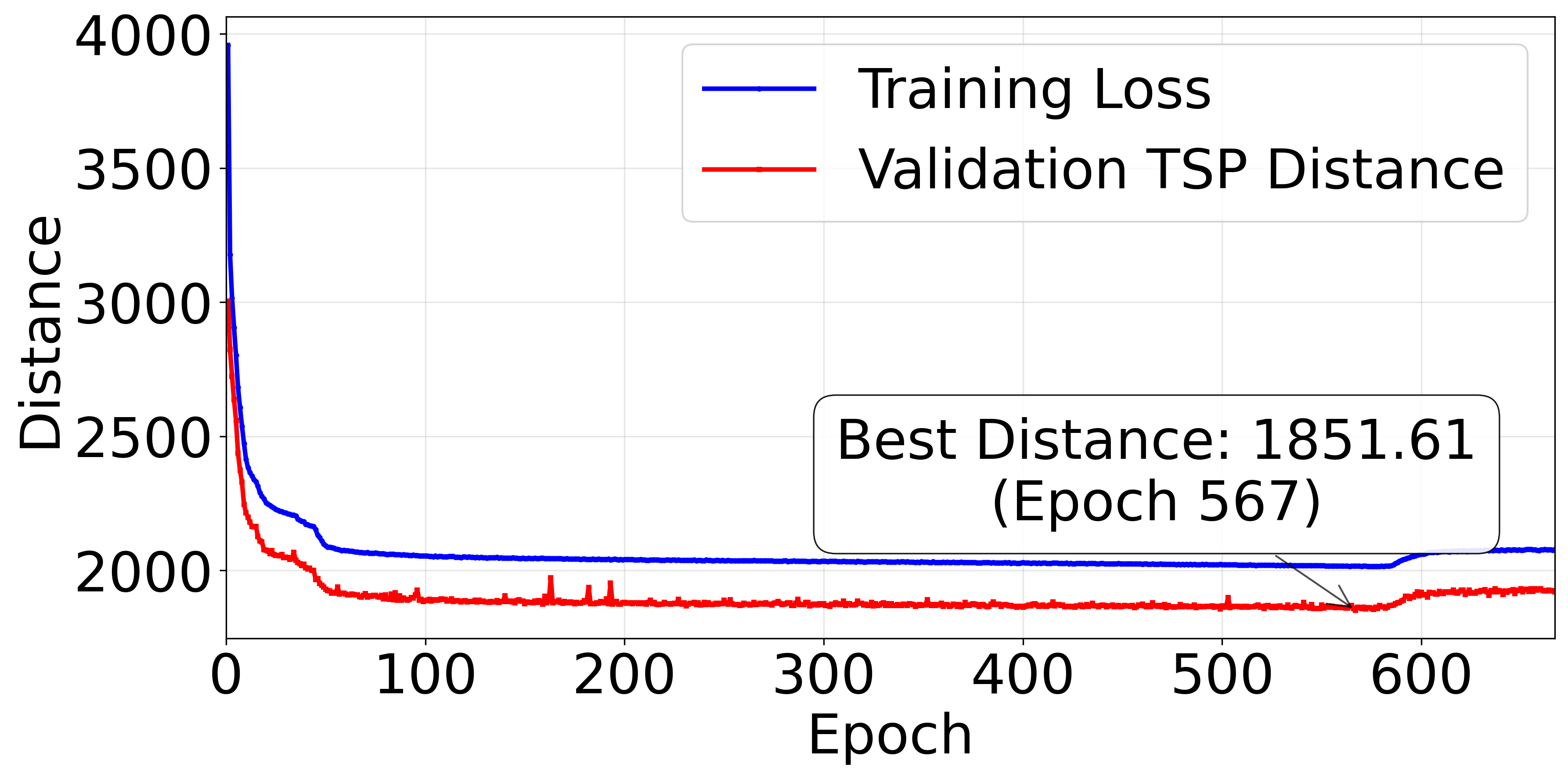}
        \caption{TSP 500}
        \label{fig:TSP500traininghis}
    \end{subfigure}

    \caption{
    Training history on different sizes.
    }
    \label{fig:TSPtraining}
\end{figure*}

The training datasets consist of $1.5$, $2.0$, and $5.0$ million uniformly sampled instances for TSP 100, TSP 200, and TSP 500, respectively.
For each setting, we use $1{,}000$ instances for validation and $1{,}000$ for testing. All experiments are conducted on a compute cluster equipped with Intel Xeon Gold 6154 CPUs and NVIDIA A100 GPUs.

\section{Results}
Figure~\ref{fig:TSPtraining} shows the training history across different problem sizes. In all cases, the training objective decreases rapidly during the early stages of optimization and stabilizes as training progresses, indicating consistent convergence behavior. The validation curves closely track the training loss, with no signs of instability or divergence, suggesting that the learned representations generalize well across instances.
\begin{figure*}[ht]
    \centering
    \begin{subfigure}{0.32\linewidth}
        \centering
        \includegraphics[width=\linewidth]{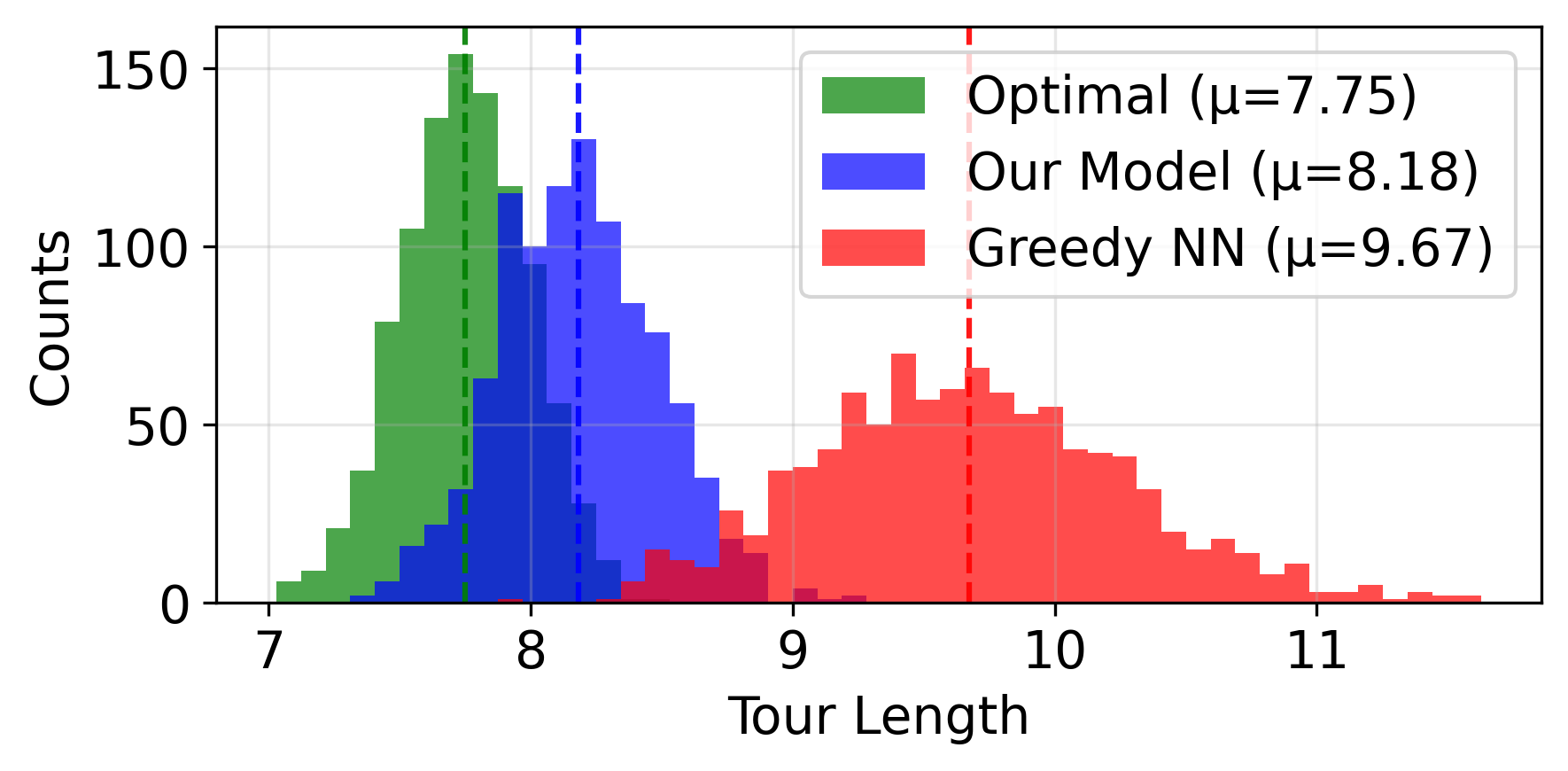}
        \caption{TSP 100}
        \label{fig:tsp100statis}
    \end{subfigure}
    \hfill
    \begin{subfigure}{0.32\linewidth}
        \centering
        \includegraphics[width=\linewidth]{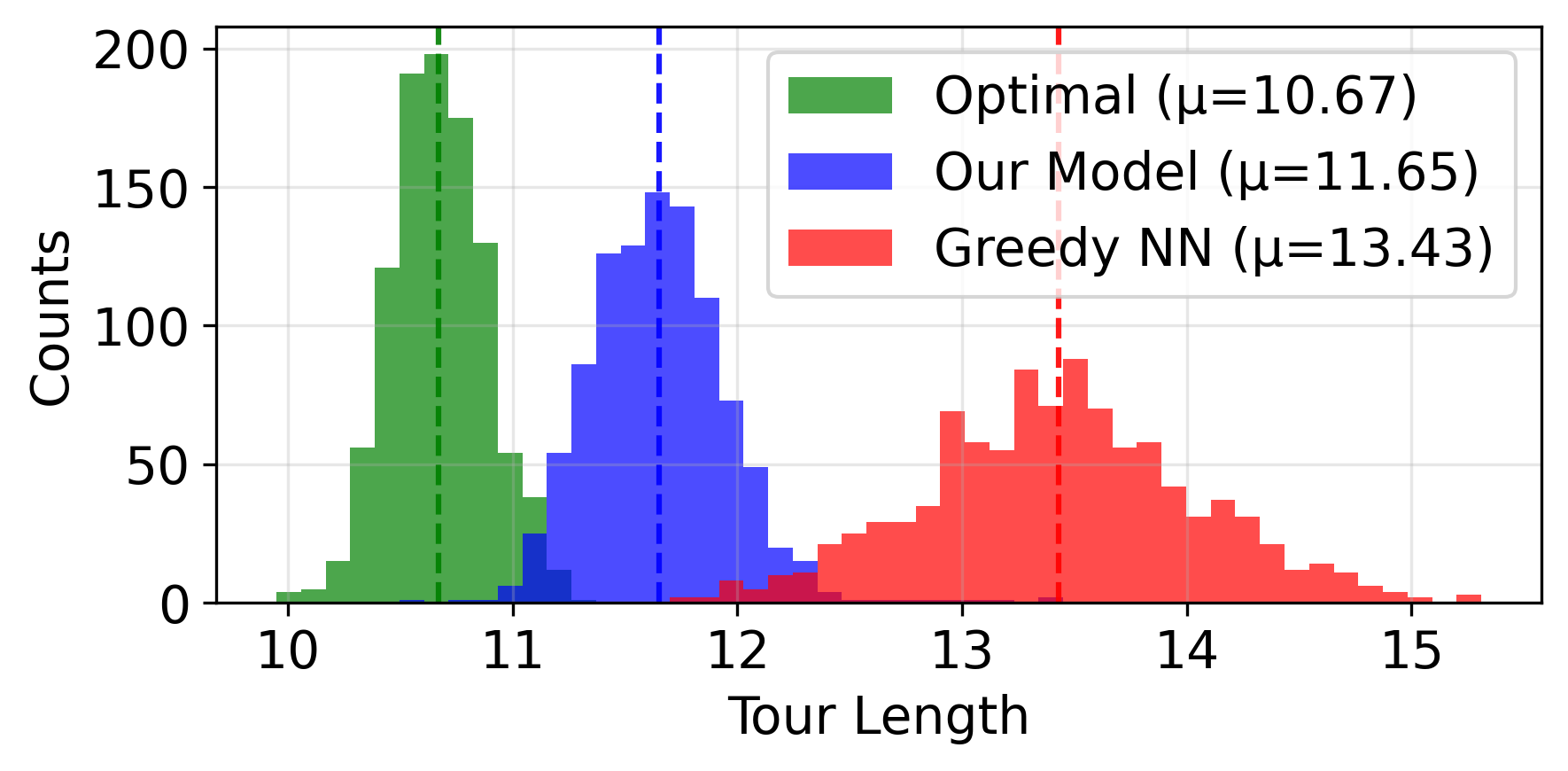}
        \caption{TSP 200}
        \label{fig:tsp200statis}
    \end{subfigure}
    \hfill
    \begin{subfigure}{0.32\linewidth}
        \centering
        \includegraphics[width=\linewidth]{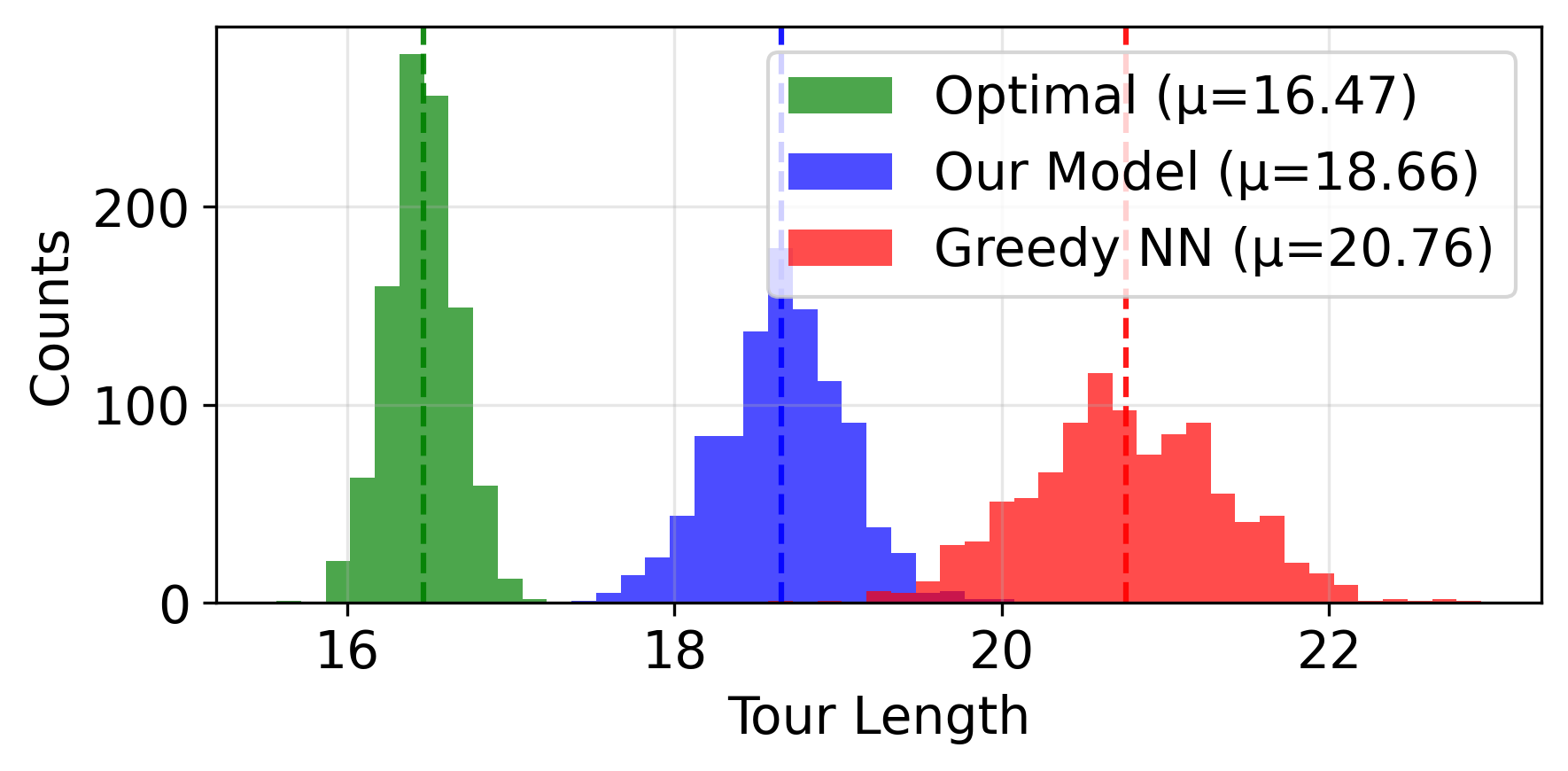}
        \caption{TSP 500}
        \label{fig:tsp500statis}
    \end{subfigure}

    \caption{
    Comparison showing our model vs. greedy nearest neighbor baseline.
    }
    \label{fig:lengthdis}
\end{figure*}
Figures~\ref{fig:lengthdis} show the tour-length distributions on the test set for 100-, 200-, and 500-node instances, using the model that achieves the lowest validation length across all hyperparameter settings. Across all problem sizes, our method consistently outperforms the Greedy Nearest Neighbor (NN) baseline, which constructs tours by iteratively selecting the closest unvisited node. The resulting histograms show that our model produces shorter and more tightly concentrated tour-length distributions, with mean lengths of $\mu = 8.18$, $11.65$, and $18.66$, compared to Greedy NN’s $\mu = 9.67$, $13.43$, and $20.76$, respectively.

\begin{table}[!htb]
\small
\setlength{\tabcolsep}{2pt}   
\centering
\caption{Performance of a size-10 MC-dropout ensemble evaluated on Euclidean TSP instances with \(n \in \{100, 200, 500\}\). The win rate reports the percentage of test instances for which a given run produces the shortest tour among all runs in the ensemble for the same instance.}\label{table:mc_dropout_ensemble}
\begin{tabular}{llcccccccccc|c}
\toprule
 {Size} & {Metric}
& {Deterministic}
& {MC-1}
& {MC-2}
& {MC-3}
& {MC-4}
& {MC-5}
& {MC-6}
& {MC-7}
& {MC-8}
& {MC-9}& {Ensemble} \\
\hline
\multirow{2}{*}{{100}}
& Mean length
& 8.18
& 8.23
& 8.23
& 8.22
& 8.23
& 8.23
& 8.23
& 8.23
& 8.23
& 8.23 
& \multirow{2}{*}{{8.11}} \\

& Wins (\%)
& 18.4
& 9.9
& 11.0
& 9.1
& 8.0
& 8.7
& 9.3
& 9.3
& 7.9
& 8.4 
& \\
\hline
\multirow{2}{*}{{200}}
& Mean length
& 11.65
& 11.77
& 11.77
& 11.76
& 11.77
& 11.77
& 11.77
& 11.77
& 11.77
& 11.78
& \multirow{2}{*}{{11.57}}\\

& Wins (\%)
& 27.4
& 8.4
& 7.4
& 10.3
& 8.1
& 8.3
& 8.4
& 8.2
& 6.3
& 7.2 
& \\\hline
\multirow{2}{*}{{500}}
& Mean length
& 18.66
& 18.70
& 18.71
& 18.70
& 18.71
& 18.71
& 18.71
& 18.70
& 18.71
& 18.70
& \multirow{2}{*}{{18.52}}\\

& Wins (\%)
& 14.3
& 9.3
& 10.2
& 9.2
& 9.6
& 9.1
& 9.1
& 10.1
& 10.6
& 8.5 
&\\
\bottomrule

\end{tabular}
\end{table}

Table~\ref{table:mc_dropout_ensemble} illustrates the behavior of our MC-dropout ensemble on Euclidean TSP instances with \(n \in \{100, 200, 500\}\). Each column corresponds to a single inference run of the same trained model evaluated on the test set. The deterministic run (without MC dropout) consistently achieves both the lowest average tour length and the highest win rate across all problem sizes (e.g., 18.4\% on TSP100), highlighting the stability of a single-pass heuristic. In contrast, individual stochastic runs yield slightly higher average tour lengths, but collectively generate a diverse set of solutions, reflecting the controlled variability introduced by MC dropout.

\begin{table}[!htb]
\small
\setlength{\tabcolsep}{2pt}   
\centering
\caption{Performance of a size-5 snapshot ensemble evaluated on Euclidean TSP instances with \(n \in \{100, 200, 500\}\).}\label{table:snapshot_ensemble}
\begin{tabular}{llccccc|c}
\toprule
 {Size} & {Metric}
& {Deterministic}
& {Snapshot-1}
& {Snapshot-2}
& {Snapshot-3}
& {Snapshot-4}
& {Ensemble} \\
\hline
\multirow{2}{*}{{100}}
& Mean length
& 8.18
& 8.20
& 8.19
& 8.18
& 8.18
& \multirow{2}{*}{{8.12}} \\

& Wins (\%)
& 27.5
& 20.0
& 21.3
& 10.6
& 20.6
& \\
\hline
\multirow{2}{*}{{200}}
& Mean length
& 11.65
& 11.68
& 11.66
& 11.66
& 11.65
& \multirow{2}{*}{{11.57}}\\

& Wins (\%)
& 21.7
& 20.3
& 22.1
& 20.6
& 15.3
& \\
\hline
\multirow{2}{*}{{500}}
& Mean length
& 18.66
& 18.72
& 18.72
& 18.70
& 18.68
& \multirow{2}{*}{{18.51}}\\

& Wins (\%)
& 29.8
& 15.4
& 16.3
& 17.1
& 21.4
& \\
\bottomrule
\end{tabular}
\end{table}

\begin{table}[!htb]
\small
\setlength{\tabcolsep}{1pt}   
\renewcommand{\arraystretch}{1.15} 
\centering
 \caption{Performance comparison on Euclidean TSP instances with $n \in \{100,200,500\}$, where all instances are drawn from a uniform distribution in the unit square.
We report average tour length, optimality gap relative to Concorde, and average runtime.
All learning-based baselines are taken from~\citep{fu2021generalize}, while classical methods are evaluated using widely adopted reference implementations.
Our method is a \emph{pure heuristic}: it constructs a complete tour in a single forward pass, without search. 
}\label{table:table1}
    \begin{tabular}{l|l|ccc|ccc|ccc} \toprule
       \multirow{2}{*}{Method} & \multirow{2}{*}{Type}  & \multicolumn{3}{c}{TSP100} & \multicolumn{3}{c}{TSP200} & \multicolumn{3}{c}{TSP500}\\
         &  & Length & Gap & Time & Length & Gap & Time & Length & Gap & Time \\
        \hline
                Concorde & Solver &  {7.75} & {0.00\%} & {0.21s} & {10.67} & {0.00\%} & {1.18s} & {16.47} & {0.00\%} & {14.25s} \\

        \hline
        GAT \citep{deudon2018learning}   & RL, S &  {8.83} &  {13.86\%} &   {0.29s} &  {13.17} &  {22.91\%} &   {2.27s} &  {28.63} &  {73.03\%} &   {9.46s}  \\

        GAT~\citep{kool2018attention}  & RL, S& {7.97} & {2.74\%} & {0.44s}  &  {11.45} &  {6.82\%} &   {2.10s} & {22.64} & {36.84\%} & {7.33s} \\

        GAT~ \citep{kool2018attention}    & RL, G& {8.10} & {4.38\%} & {0.01s} & {11.61} & {8.31\%} & {0.04s} & {20.02} & {20.99\%} & {0.71s}  \\

        GAT~\citep{kool2018attention}   & RL, BS& {7.95} & {2.48\%} & {0.60s}  & {11.38} & {6.14\%} & {2.70s} & {19.53} & {18.02\%} & {10.31s}  \\

        GCN~\citep{joshi2019efficient} & SL, G & {7.88} &  {1.48\%} &   {0.19s} & {17.01} & {58.73\%} &  {0.46s} & {29.72} & {79.61\%} & {3.13s}   \\

        GCN~\citep{joshi2019efficient} & SL, BS &{7.88} & {1.48\%} & {0.19s} &  {16.19} &  {51.02\%} &   {2.17s} &  {30.37} &  {83.55\%} &   {17.82s}  \\\hline

        \tabincell{l}{Christofides\\ \citep{christofides1976worst}} & \tabincell{l}{Heuristic\\ (networkx)}  &{8.69} & {12.12\%} & {0.11s} & {12.03} & {12.75\%} & {0.58s} & {18.66} & {13.30\%} & {6.00s} \\

        2-opt & \tabincell{l}{Heuristic\\ (C++)}  &{8.69} & {12.13\%} & {1.29s} & {11.92} & {11.72\%} & {5.99s} & {18.33} & {11.29\%} & {44.65s} \\
        \tabincell{l}{Greedy\\ Nearest-Neighbor} & \tabincell{l}{Heuristic\\ (C++)} &{9.67} & {24.77\%} & {0.04ms} & {13.43} & {25.87\%} & {0.11ms} & {20.76} & {26.05\%} & {0.63ms} \\
        \hline

        \tabincell{l}{Ours (batch size = 512)\\no MC dropout} & {GNN, UL}  &{8.18} & {5.55\%} & {0.34ms} & {11.65} & {9.18\%} & {0.96ms} & {18.66} & {13.30\%} & {5.84 ms} \\ \hline
          \tabincell{l}{Ours (batch size = 512)\\snapshot  ensembling: 5 } & {GNN, UL}  &{8.12} & {4.77\%} & {1.69ms} & {11.57} & {8.43\%} & {4.75ms} & {18.51} & {12.39\%} & {29.19ms} \\  \hline
        \tabincell{l}{Ours (batch size = 512)\\dropout ensembling: 10 } & {GNN, UL}  &{8.11} & {4.65\%} & {3.37ms} & {11.57} & {8.43\%} & {9.48ms} & {18.52} & {12.45\%} & {59.34ms} \\ \hline
        \tabincell{l}{Ours (batch size = 512)\\ dropout + snapshot } & {GNN, UL}  &{8.09} & {4.39\%} & {4.70ms} & {11.55} & {8.25\%} & {13.28ms} & {18.47} & {12.11\%} & {82.63ms} \\

        \bottomrule
    \end{tabular}
\end{table}

Table~\ref{table:snapshot_ensemble} illustrates the behavior of our snapshot ensemble across problem sizes. Each column corresponds to a single inference run using the same model architecture, but taken from different training epochs. For TSP 100 and TSP 200, snapshots are drawn from training epochs \(\{850, 900, 950, 1000\}\), while for TSP 500 they are taken from \(\{400, 450, 500,550\}\).

The deterministic model (equivalently, the best-validation snapshot) is consistently among the strongest single predictors, achieving the lowest average tour length for \(n=100\), tying for the lowest average tour length for \(n=200\), and remaining the strongest at \(n=500\). However, the snapshot that maximizes win rate is not always the best-validation one. In particular, for \(n=200\), Snapshot-2 attains the highest win rate (22.1\%), exceeding that of the deterministic model (21.7\%). This observation highlights that different snapshots exhibit distinct trade-offs between average solution quality and instance-wise dominance.

Although individual snapshots generally produce slightly higher average tour lengths than the deterministic model, they generate a diverse set of candidate tours. This diversity is analogous in spirit to MC-dropout ensembles, yet does not rely on injecting stochasticity at inference time. By selecting the best solution among multiple deterministic snapshots, the snapshot ensemble consistently improves the overall mean tour length across all problem sizes while preserving fully deterministic inference.

Importantly, our ensemble models exploit the diversity without introducing search or iterative refinement. These results indicate that the observed diversity arises from optimization-induced variations in the learned representations rather than explicit exploration, thereby preserving the non-iterative nature of the heuristic. Table~\ref{table:table1} reports the performance of our model and the compared methods. We compare against classical heuristics implemented using widely adopted and optimized reference codebases.  Specifically, the Christofides heuristic is implemented using the \texttt{networkx} library, while the Greedy Nearest-Neighbor and 2-opt heuristics are implemented in optimized C++ with high efficiency.
Our method is evaluated using batched inference with batch size 512 on an A100 GPU, whereas classical heuristics are executed on CPU~\citep{hagberg2020networkx}.

Despite being a purely feed-forward model that constructs a complete tour in a single pass, our method reduces the optimality gap of Greedy by more than a factor of two on TSP 100 and TSP 200, and by nearly half on TSP 500. This improvement does not arise from local refinement or search, but from the model’s internalization of global TSP structure through the objective in Equation~\ref{eq:tsploss}. Compared to prior learning-based approaches, our method occupies a fundamentally different point in the quality–efficiency landscape. Reinforcement learning and supervised models often rely on autoregressive decoding, beam search, or hybrid pipelines with local improvement heuristics to achieve competitive performance, incurring substantial computational overhead at inference time.  In contrast, our GNN operates as a standalone heuristic: a single forward pass produces a complete solution, yet achieves solution quality that is competitive with, and in several cases superior to, stronger learning-based baselines.

Across all problem sizes, we observe a clear and smooth trade-off between solution quality and inference cost as different ensemble strategies are applied to the same unsupervised GNN. Even in the deterministic setting, a single forward pass already achieves strong performance, with optimality gaps of 5.55\%, 9.18\%, and 13.30\% on TSP 100, TSP 200, and TSP 500, respectively, at sub-millisecond to few-millisecond latency. demonstrating that the model internalizes substantial global structure without search or iterative refinement. Snapshot ensembling over five checkpoints and MC-dropout ensembling over ten stochastic passes further and consistently reduce the gap by exploiting complementary sources of diversity arising from optimization dynamics and controlled representation-level stochasticity, respectively, with the combined strategy achieving the strongest results overall (4.39\%, 8.25\%, and 12.11\%). Importantly, these gains are obtained without additional training runs, neighborhood exploration, or post hoc local improvement, and inference remains firmly in the millisecond regime. Compared to classical heuristics, this yields a consistently more favorable quality--efficiency trade-off: while Christofides attains gaps of 12.12\%, 12.75\%, and 13.30\% on TSP 100, TSP 200, and TSP 500, respectively, our feed-forward GNN substantially outperforms it on smaller instances and matches it on TSP 500, at orders-of-magnitude lower inference cost on GPUs. More aggressive heuristics such as 2-opt can further reduce the gap, but only at the expense of significantly higher runtime, particularly as problem size grows. Overall, these results suggest that learned, structure-aware heuristics can rival or surpass classical hand-designed methods, while naturally inducing a continuous and controllable trade-off between solution quality and inference cost.

\section{Conclusion}
We propose an unsupervised, non-autoregressive graph neural network heuristic that constructs complete TSP solutions in a single forward pass by directly encoding global combinatorial constraints into the learning objective, without reliance on search. Our performance gains arise from three elements: symmetry-aware equivariant representations that preserve permutation structure, controlled stochasticity via dropout to improve robustness, and snapshot ensembling along a single training trajectory to obtain diverse solutions at no additional training cost. Together, these elements position graph neural networks as effective heuristics for combinatorial optimization and suggest a path toward broader applicability through more expressive symmetry-aware architectures and extensions to related problems.
\section{Acknowledgement}
This project is partially supported by the Eric and Wendy Schmidt AI
in Science Postdoctoral Fellowship, a Schmidt Futures program; the National Science Foundation
(NSF) and the  National Institute of Food and Agriculture (NIFA); the Air
Force Office of Scientific Research (AFOSR);  the Department of Energy;  and the Toyota Research Institute (TRI).

\section*{Ethical Statement}
There are no ethical issues.
\bibliography{utspreference}
\bibliographystyle{plainnat}    

\end{document}